\theoremstyle{definition}
\newtheorem{definition}{Definition}
\newtheorem{theorem}{Theorem}
\newcommand{\sketchedeq}{\ensuremath{X~?{=}~Y}\xspace}
\newcommand{\sketchedplus}{\ensuremath{X~?{+}~Y}\xspace}
\newcommand{\reified}{\textit{reified}\xspace}
\newcommand{\qone}{\ensuremath{\textbf{Q}_1}\xspace}
\newcommand{\qtwo}{\ensuremath{\textbf{Q}_2}\xspace}
\newcommand{\qthree}{\ensuremath{\textbf{Q}_3}\xspace}
\newcommand{\qfour}{\ensuremath{\textbf{Q}_4}\xspace}
\newcommand{\qfive}{\ensuremath{\textbf{Q}_5}\xspace}
\newcommand{\eplus}{\ensuremath{\mathbf{E}^+}\xspace}
\newcommand{\eminus}{\ensuremath{\mathbf{E}^-}\xspace}
\newcommand{\dist}{\ensuremath{\textit{dist}}\xspace}
\newcommand{\metae}{\textit{metaE(\ensuremath{P,S,\eplus,\eminus})}\xspace}
\newcommand{\metad}{\textit{metaD(\ensuremath{S,D})}\xspace}
\newcommand{\metar}{\textit{metaR(\ensuremath{S,D})}\xspace}
\newcommand{\metac}{\textit{metaC(\ensuremath{P})}\xspace}
\definecolor{awesome}{rgb}{1.0, 0.13, 0.32}
\newcommand{\sergey}[1]{#1}
\newcommand{\addedbegin}{}
\newcommand{\addedend}{}
\def\BibTeX{{\rm B\kern-.05em{\sc i\kern-.025em b}\kern-.08em
    T\kern-.1667em\lower.7ex\hbox{E}\kern-.125emX}}
\begin{document}

\title{Sketched Answer Set Programming\\
\thanks{This work has been partially funded by the ERC AdG SYNTH (Synthesising inductive data models) }
}

\author{\IEEEauthorblockN{Sergey Paramonov}
\IEEEauthorblockA{\textit{KU Leuven}\\
Leuven, Belgium \\
sergey.paramonov@kuleuven.be}
\and
\IEEEauthorblockN{Christian Bessiere}
\IEEEauthorblockA{\textit{LIRMM} \textit{CNRS}\\
Montpellier, France\\
bessiere@lirmm.fr}
\and
\IEEEauthorblockN{Anton Dries}
\IEEEauthorblockA{\textit{KU Leuven}\\
Leuven, Belgium \\
anton.dries@kuleuven.be}
\and
\IEEEauthorblockN{Luc De Raedt}
\IEEEauthorblockA{\textit{KU Leuven}\\
Leuven, Belgium\\
luc.deraedt@kuleuven.be}
}

\maketitle

\begin{abstract}
Answer Set Programming (ASP) is a powerful modeling formalism for combinatorial problems. However, writing ASP models can be hard. We propose a novel method, called Sketched Answer Set Programming (SkASP), aimed at facilitating this. In SkASP, the user writes partial ASP programs, in which uncertain parts are left open and marked with question marks. In addition, the user provides a number of positive and negative examples of the desired program behaviour.  SkASP then synthesises a complete ASP program. This is realized by  rewriting the SkASP program into another ASP program, which can then be solved by traditional ASP solvers.  We evaluate our approach on 21 well known puzzles and combinatorial problems inspired by Karp’s 21 NP-complete problems and on publicly available ASP encodings.
\end{abstract}

\begin{IEEEkeywords}
    inductive logic programming, constraint learning, answer set programming, sketching, constraint programming, relational learning
\end{IEEEkeywords}
\section{Introduction}\label{sec:intro}
Many AI problems can be formulated as constraint satisfaction problems that can
be solved by state-of-the-art constraint programming (CP)  \cite{handbookcp} or answer set programming (ASP) techniques \cite{whatisasp}. 
Although these frameworks provide declarative representations that are in principle
easy to understand, writing models in such languages is not always easy.


On the other hand, for traditional programming languages, there has been significant attention for techniques that are able
to complete \cite{sketching_phd_thesis} or learn a program from examples \cite{gulwani2015inductive}. 
The idea of program sketching is to start from a sketched program and some examples
to complete the program. A sketched program is essentially a program where some of the tests and constructs
are left open because the programmer might not know what exact instruction to use.  For instance, 
when comparing two variables $X$ and $Y$, the programmer might not know whether to use $X < Y$ or $X \leq Y$ or $X > Y$ and write 
$X~{?}{=}~Y$ instead (while also specifying the domain of ${?}{=}$, that is, which concrete operators are allowed). 
By providing a few examples of desired program behaviour and a sketch, the target program can then be automatically found.
Sketching is thus a form of "lazy" programming as one does not have to fill out all details in the programs; 
it can also be considered as program synthesis although there are strong syntactic restrictions on 
the programs that can be derived; and it can be useful for repairing programs once a bug in a program has been detected.
Sketching has been used successfully in a number of applications \cite{sketching_original,sketch_recent,jsketch} to synthesise imperative programs.
It is these capabilities that this paper brings to the field of ASP.

As a motivating example assume 
one needs to solve 
the Peacefully Coexisting Armies of Queens, a version of the $n$-queens problem with black and white queens, where queens of the same color do not attack each other. 
One might come up with the following sketched program (where 
$R_w$ ($C_b$) stand for the variable representing the row (column) of a white (black) queen): 
\begin{lstlisting}[caption=Peacefully Coexisting Armies of Queens,label=lst:queens,basicstyle=\scriptsize\ttfamily,escapechar=@,numbers=left,xleftmargin=11pt]
:- queen(w,Rw,Cw), queen(b,Rb,Cb), Rw ?= Rb. 
:- queen(w,Rw,Cw), queen(b,Rb,Cb), Cw ?= Cb.
:- queen(w,Rw,Cw), queen(b,Rb,Cb), Rw ?+ Rb ?= Cw ?+ Cb. @\label{lst:queens:line:arithmetic}@
\end{lstlisting}
This program might have been inspired by a solution written in the constraint programming language Essence available from the CSP library \cite{csplib:prob110}.  
Intuitively, the sketched ASP specifies constraints on the relationship between two queens on the rows (first rule), columns (second rule) and diagonals (third rule), 
but it expresses also uncertainty about the particular operators that should be used between the variables 
through the built-in alternatives for ${?}{=}$  (which can be instantiated to one of $=,\neq,<,>,\leq,\geq$) and for ${?}+$ (for arithmetic operations). 
When providing an adequate set of examples to the ASP, the SkASP solver will then produce the correct program.

The key contributions of this paper are the following: 1) we adapt the notion of sketching for use with Answer Set Programming;
2) we develop an approach (using ASP itself) for computing solutions to a sketched Answer Set Program;
3) we contribute some simple complexity results on sketched ASP; 
and 4) we investigate the effectiveness and limitations of sketched ASP on a dataset of 21 typical ASP programs.

\newcommand{\rulesep}{\unskip\ \vrule\ }
\newcommand{\hrulesep}{\unskip\ \hrule\ }

\begin{figure*}[htb]
\begin{subfigure}[t]{0.315\textwidth}
  \renewcommand{\figurename}{Listing}
  \vspace{3pt}
  \begin{Verbatim}[fontsize=\scriptsize,numbers=left,xleftmargin=0mm,commandchars=\\\{\}]
[SKETCH]
reached(Y) :- cycle(a,Y).\label{line:reached}
reached(Y) :- cycle(X,Y), reached(X).\label{line:reached2}
    :- ?p(Y), ?not ?q(Y). \label{line:ham_constraint}
[EXAMPLES]
positive: cycle(a,b). cycle(b,c). ...
negative: cycle(a,b). cycle(b,a).
[SKETCHEDVAR]
?p/1 : node, reached
?q/1 : node, reached
[FACTS]
node(a). node(b). node(c).
[EXAMPLES]
positive: cycle(a,b). cycle(b,c)...  \label{line:example}
negative: cycle(a,b). cycle(b,a).
\end{Verbatim}
\caption{Hamiltonian Cycle (ASP due to \protect\cite{ASPbook})} \label{lst:ham}
\end{subfigure}
\rulesep
\begin{subfigure}[t]{0.66\textwidth}
  \vspace{3pt}
  \renewcommand{\figurename}{Sketch}
  \begin{Verbatim}[fontsize=\scriptsize,numbers=left,xleftmargin=5.7mm,commandchars=\\\{\}]
%%%%% EXAMPLES AND DECISIONS %%%%%%
positive(0). cycle(0,a,b). cycle(0,b,c). cycle(0,c,a).\label{line:example_rewritten}
reified_q_choice(c_node). reified_q_choice(c_reached).
1 \{decision_q(X) : reified_q_choice(X)\} 1. \label{line:decision1}
1 \{decision_not(pos) ; decision_not(neg)\} 1.\label{line:decision2}
%%%%% INFERENCE RULES %%%%%%
reified_q(E,c_node,X0) :- node(X0),examples(E). \label{line:reified_q}
reified_q(E,c_reached,X0) :- reached(E,X0).
reached(E,Y) :- cycle(E,a,Y), examples(E). \label{line:reach_rewritten}
reached(E,Y) :- cycle(E,X,Y), reached(E,X), examples(E).
reified_not(E,pos,Q,Y) :- reified_q(E,Q,Y). \label{line:reified_not1}
reified_not(E,neg,Q,Y) :- not reified_q(E,Q,Y), dom1(Y),dom2(Q),examples(E). \label{line:reified_not2}
%%%%% POSITIVE/NEGATIVE SKETCHED RULES %%%%%%
:- reified_p(E,P,Y), reified_not(E,Not_D,Q,Y), decision_p(P), decision_q(Q),\label{line:positive_rewritten2}
               decision_not(Not_D), positive(E).\label{line:positive_rewritten3}
neg_sat(E) :- reified_p(E,P,Y),  \label{line:negative_rewritten1} reified_not(E,Not_D,Q,Y), decision_p(P), \label{line:negative_rewritten2}
decision_q(Q), decision_not(Not_D), negative(E).\label{line:negative_rewritten3}
:- not neg_sat(E), negative(E).
\end{Verbatim}
\caption{ASP core of rewritten Fig. \protect \ref{lst:ham} (only predicate $q$ shown, domains, facts, etc omitted)} \label{lst:rewriting}
\end{subfigure}
\hrulesep
\begin{subfigure}[t]{0.43\textwidth}
  \renewcommand{\figurename}{Sketch}
  \vspace{2pt}
\begin{Verbatim}[fontsize=\scriptsize,numbers=left,xleftmargin=0mm,commandchars=\\\{\}]
[SKETCH] %constraints on squares, rows, columns
:- cell(X,Y,N), cell(X,Z,M), Y ?= Z, N ?= M. \label{lst:sketch_sudoku:ineq}
:- cell(X,Y,N), cell(Z,Y,M), X ?= Z, N ?= M.
insquare(S,N) :- cell(X,Y,N), square(S,X,Y).
:- num(N), squares(S), ?not insquare(S,N). 
\end{Verbatim}
\caption{Sudoku (core). \protect ASP code from \cite{asp_tutorial_sudoku}} \label{lst:sudoku}
\end{subfigure} 
\rulesep
\begin{subfigure}[t]{0.55\textwidth}
  \vspace{2pt}
  \renewcommand{\figurename}{Listing}
\begin{Verbatim}[fontsize=\scriptsize]
[SKETCH] %constraints on rows and columns
:- cell(X,Y,N), cell(X,Z,M), Y ?= Z, N ?= M.
:- cell(X,Y,N), cell(Z,Y,N), X ?= X, N ?= M.
[EXAMPLES]
positive: cell(1,1,a). cell(1,2,b). cell(1,3,c)...
\end{Verbatim}
\caption{Latin Square (based on Sudoku; core)} \label{lst:latin_square}
\end{subfigure}
\caption{Collection of sketches and an example of rewriting used in the paper}
\label{fig:table_with_sketches}
\end{figure*}

\section{ASP and Sketching }\label{sec:formal}
Answer Set Programming (ASP) is a form of declarative programming based on the stable model semantics \cite{stable_models} of logic programming \cite{whatisasp}. We follow the standard syntax and semantics of ASP as described in the Potassco project \cite{ASPbook}. A \textit{program} is a set of rules of the form
$
  a \leftarrow a_1, \dots a_k, \textit{not } a_{k+1}, \dots, \textit{not } a_{n}
$
A positive or negative atom is called a \textit{literal}, $a$ is a positive propositional literal, called a \textit{head}, and for $i$ between $1$ and $k$, $a_i$ is a positive propositional atom; and for $i$ between $k+1$ and $n$, $\textit{not }a_i$ is a negative propositional literal. The \textit{body} is the conjunction of the literals. A rule of the form $a \leftarrow .$ is called a \textit{fact} and abbreviated as $a.$ and a rule without a head specified is called an \textit{integrity constraint} ($a$ is $\bot$ in this case). \textit{Conditional literals}, written as $a : l_1, \dots, l_n$, and \textit{cardinality constraints}, written as $c_{\min} \{ l_1,\dots,l_n \} c_{\max}$, are used ($l_1, \dots,l_n$ are literals here, and $c_{\min}, c_{\max}$ are non-negative integers). 
A conditional atom holds if its condition is satisfied and a cardinality constraint is satisfied if between $c_{\min}$ and $c_{\max}$ literals hold in it. 
Furthermore, as ASP is based on logic programming and also allows for {\em variables}, denoted
in upper-case, the semantics of a rule or expression with a variable is the same as that of its set of ground instances. 
We restrict the ASP language to the NP-complete subset specified here. For more details on ASP, see \cite{ASPbook,eiter_asp_primer}.

We extend the syntax of ASP with \textit{sketched} language constructions. 
Instead of allowing only atoms of the form $p(t_1, ...,t_n)$, where $p/n$ is a predicate
and the $t_i$ are terms (variables or constants), we now allow to use
{\em sketched atoms} of the form $?q(t_1, ...,t_n)$ where $?q$ is a {\em sketched predicate variable}
with an associated domain $d_q$ containing actual predicates of arity $n$. 
The meaning of the sketched atom $?q(t_1, ... ,t_n)$ is that it can be replaced
by any real atom $p(t_1, ...,t_n)$ provided that $p/n \in d_q$.
It reflects the fact that the programmer does not know which $p/n$ from $ d_q$ should be used. 
Sketched atoms can be used in the same places as any other atom. 

We also provide some syntactic sugar for some special cases and variants,
in particular, we use a {\em sketched inequality} \sketchedeq, a  {\em sketched} arithmetic operator \sketchedplus (strictly speaking, this is not a sketched predicate but an operator, but we only make this distinction where needed), and {\em sketched negation} ${?}\textit{not } p(X)$ (which is, in fact, a sketched operator of the form ''?not <atom>``; it always has as input a positive atom and its domain is \{atom, -atom\}, where -atom is a syntactically new atom, which represents the negation of the original atom).  The domain of \sketchedeq is the set $\{=,\neq,<,>,\geq,\leq,\top\}$, where $\top$ is the atom that is always satisfied by its arguments, the domain of \sketchedplus is the set $\{ +, -, \times, \div, \dist \}$ where $\dist(a,b)$ is defined as $| a - b |$, and the domain of ${?}\textit{not}$  is $\{\emptyset, not\}$. An example of sketched inequality can be seen in Line \ref{lst:sketch_sudoku:ineq} of Figure \ref{lst:sudoku}, examples of sketched predicates and negation in Line \ref{line:ham_constraint} of Figure \ref{lst:ham} and sketched arithmetic in Line \ref{lst:queens:line:arithmetic} of Sketch \ref{lst:queens}.

\textit{A sketched variable} is a sketched predicate, a sketched negation, a sketched inequality or a sketched arithmetic operator. The set of all sketched variables is referred to  as $S$. 
\sergey{Predicate $p$ \textit{directly positively (negatively) depends} on $q$ iff $q$ occurs positively (negatively) in the body of a rule with $p$ in the head or $p$ is a sketched predicate and $q$ is in its domain; $p$ {\em depends (negatively) on} $q$ iff $(p,q)$ is in the transitive closure of the direct dependency relation. A sketch is \textit{stratified} iff there is no negative cyclic dependency. We restrict programs to the stratified case.}
An \textit{example} is a set of ground atoms. 

A \textit{preference} is a function from $\Theta$ (possible substitutions) to $\mathbb{Z}$. A substitution $\theta$ is \textit{preferred} over $\theta'$ given preferences $f$ iff for all $s_i \mapsto d_i \in \theta$ and $s_{i} \mapsto d_{i}' \in \theta'$ it holds that $f(s_i \mapsto d_i) \geq f(s_i \mapsto d_{i}')$ and at least one inequality is strict. First, when $f(\theta)$ is constant,
all substitutions are equal and there are no preferences (all equally preferred). Because specifying preferences might impose an extra burden on the user, we also provide default preferences for the built-in sketched variables (like inequality, etc), cf. the experimental section.

\addedbegin
\textit{The Language of Sketched Answer Set Programming (SkASP)}
supports some of the language features of ASP. 
The language of SkASP has the following characteristics:

\begin{itemize}
\item it allows for a set of rules of the form $a \leftarrow b_1, \dots, b_n, \textit{not }c_1, \dots, \textit{not }c_m.$;
\item predicates (such as a predicate $p/n$ or comparison $\leq$) and operators (such as arithmethic $+,-,\times$, etc) in these rules can be sketched; 
\item aggregates can be used in the body of the rules as well (stratified; see Extension Section \ref{section:aggregates});
\item the SkASP program has to be stratified; 
\item the choice rules are not allowed.
\end{itemize}

The key idea behind our method is that the SkASP program is rewritten into a normal ASP program (with choice rules, etc.) in order to obtain a solution through the use of an ASP solver.
As we will see in Theorem \ref{theorem:complexity_sat}: the language of SkASP stays within the complexity bounds of normal ASP, which makes the rewriting possible (SkASP$\mapsto$ASP).

Let us now formally introduce the problem of SkASP.
\addedend
\begin{definition}[The Problem of Sketched Answer Set Programming]
  Given  a sketched answer set program $P$ with sketched variables $S$ of domain $D$ and preferences $f$, and positive and negative sets of examples \eplus and \eminus, the \emph{Sketched Answer Set Problem} is to find all substitutions $\theta:S\mapsto D$ preferred by $f$ such that $P\theta \cup \{e\}$ has an answer for all $e$ in \eplus and for no $e$ in \eminus. 
The decision version 
of SkASP  asks whether there exists such a substitution $\theta$.
\end{definition}

\section{Rewriting Schema}\label{sec:method}
One might consider a baseline approach that would enumerate all instances of the ASP sketch, and in this way produce
one ASP program for each assignment that could then be tested on the examples. 
This naive grounding and testing approach is, however, infeasible: the number of possible combinations grows exponentially with the number of sketched variables. E.g., for the sketch of the Radio Frequency Problem \cite{fap} there are around $10^5$ possible assignments to the sketched variables. Multiplied by the number of examples, around a million ASP programs would have to be generated and tested. This is  infeasible in practice.

The key idea behind our approach is to rewrite a SkASP problem $(P,S,D,f,\eplus,\eminus)$ into an ASP program such that the original sketching program has a solution iff the ASP program has an answer set. 
This is achieved by 1) inserting \textit{decision variables} into the sketched predicates,
and 2) introducing {\em example identifiers} in the predicates. 

The original SkASP problem is then turned into an ASP problem  
on these decision variables and solutions to the ASP problem allow to reconstruct the SkASP substitution.

The rewriting procedure has four major steps: \textit{example expansion}, \textit{substitution generation}, \textit{predicate reification} and \textit{constraint splitting}. \sergey{(Here we follow the notation on meta-ASP already used in the literature \cite{inductive_asp,asp_meta}.)} 

\vspace{3pt}
\textbf{Example Identifiers} 
To allow the use of multiple examples in the program, every relevant predicate is extended with
an extra argument that represents the example identifier.  The following steps are used
to accommodate this in the program,\sergey{ denoted as \metae.}
\begin{enumerate}
\item
  Let $\textit{SP}$ be the set of all predicates that depend on a predicate occurring in one of the examples. 
\item
  Replace each literal $p(t_1, ...,t_n)$ for a predicate $p \in \textit{SP}$ in the program $P$ by 
the literal $p(E,t_1, ...,t_n)$, where $E$ is a variable not occurring in the program. 
\item
    Add the guard $\textit{examples}(E)$ (the index of all pos./neg. examples) to the body of each rule in $P$.
\item
For each atom $p(t_1,...,t_n)$ in the $i$-th example, add the fact $p(i,t_1,...,t_n)$ to $P$.
\item
  For each positive example $i$, add the fact $\textit{positive}(i)$ to $P$, and for each negative one, the fact $\textit{negative}(i)$.
\end{enumerate}
E.g., the rule in Line \ref{line:reached} of Figure \ref{lst:ham} becomes Line \ref{line:reach_rewritten} of Figure \ref{lst:rewriting}, and the example in Line \ref{line:example} is rewritten as in Line \ref{line:example_rewritten}.


\vspace{3pt}
\textbf{Substitution Generation} 
We now introduce the decision variables, \sergey{referred as \metad}:
\begin{enumerate}
\item
For each sketched variable $s_i$ with domain $D_i$ 
$$1~\{ \textit{decision}\_s_i(X) : d_i(X) \}~1 .$$ 
\item 
For each value $v$ in $D_i$, add the fact $d_i(v)$.
\end{enumerate}
This constraint ensures that each answer set has exactly one value from the domain assigned
to each sketched variable. 
This results in a one-to-one mapping between decision atoms and sketching substitution $\theta$. An example can be seen in Lines \ref{line:decision1} and \ref{line:decision2} of Figure \ref{lst:rewriting}.


\vspace{3pt}
\textbf{Predicate Reification} 
We now introduce the reified predicates, \sergey{referred as \metar}
\begin{enumerate}
\item
Replace each occurrence of a sketched atom $s(t_1,...,t_n)$ in a rule of $P$ with the atom $\reified\_s( D, t_1, \dots, t_n)$, 
and add $\textit{decision}\_s(D)$ to the body of the rule.
\item
For each sketched variable $s$ and value $d_i$ in its domain, add the following rule to $P$: 
$$\reified\_s(d_i,X_1, \dots, X_n) \leftarrow d_i(X_1, \dots, X_n).$$
where the first argument is the decision variable for $s$.
\end{enumerate}
Thus, semantically $\reified\_s(d_i,X_1, \dots, X_n)$ is equivalent to $d_i(X_1, \dots, X_n)$  
and $\textit{decision}\_s(d_i)$ indicates that the predicate $d_i$ has been selected for the sketched variable $s$.
Notice that we focused here on the general case of a sketched predicate ${?}p(\dots)$. 
It is straightforward to adapt this for 
the sketched inequality, negation and arithmetic. Examples of reification can be seen in Lines \ref{line:reified_q} of Figure \ref{lst:rewriting} for the sketched ${?}q$ of the sketch in Figure \ref{lst:ham} and in Lines \ref{line:reified_not1}, \ref{line:reified_not2} for reified negation.

\vspace{3pt}
\textbf{Integrity Constraint Splitting} 
\sergey{(referred as \metac)}
\begin{enumerate}
  \item Replace each integrity constraint $\leftarrow \textit{body}$ by: 
$$\leftarrow \textit{body}, \textit{positive}(E) $$
$$\textit{negsat}(E) \leftarrow \textit{body}, \textit{negative}(E) $$
\item
And add the rule to the program:
$$ \leftarrow \textit{negative}(E), \textit{not} ~ \textit{negsat}(E).$$ 
\end{enumerate}
This will ensure that all positives and none of the negatives have a solution. For example, the constraint in Line \ref{line:ham_constraint} of Figure \ref{lst:ham} is rewritten into a positive constraint in Lines \ref{line:positive_rewritten2},\ref{line:positive_rewritten3} and into a negative in Lines \ref{line:negative_rewritten1}, \ref{line:negative_rewritten2}, \ref{line:negative_rewritten3}.

Another important result is that the preferences do not affect the decision complexity. Proofs can be found in the supplementary materials. 
\begin{theorem}[Sound and Complete Sketched Rewriting]
  A sketched ASP program $(P, S, D, f, \eplus,\eminus)$ has a satisfying substitution $\theta$ iff the meta program \\$T=$ \mbox{$\metae\cup\metad \cup \metar \cup \metac$} has an answer set.
  \label{theorem:rewriting}
\end{theorem}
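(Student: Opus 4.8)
\textit{Proof plan.} The plan is to establish a bijection between the answer sets of $T$ and the satisfying substitutions $\theta:S\to D$ of $(P,S,D,f,\eplus,\eminus)$, from which the stated equivalence follows at once. Note first that the preference $f$ plays no role here: it only ranks satisfying substitutions and does not change which substitutions make all of \eplus and none of \eminus satisfiable, hence does not affect whether $T$ is consistent. I would decompose $T$ into its ``generate'' part, the cardinality rules of \metad, and its ``deterministic $+$ test'' part, namely \metae, \metar and \metac together with the domain and example facts. Because choice rules are disallowed in $P$ and $P$ is stratified (I treat the core language; aggregates are handled in the extension), once the decision atoms are fixed the remainder of $T$ is a stratified normal program with integrity constraints; thus an answer set of $T$ is precisely a choice of decision atoms together with the unique perfect model of the remaining stratified program, provided every integrity constraint is satisfied in it.

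Second, I would prove three structural lemmas reducing $T$ to ``$P\theta$ copied over the examples''. (i) By the $1\{\dots\}1$ rules and the facts $d_i(v)$, the restriction of any answer-set candidate to the decision atoms is exactly $\{\textit{decision}\_s_i(\theta(s_i)) : s_i\in S\}$ for a unique $\theta:S\to D$, and every $\theta$ arises so; this is the bijection at the ``generate'' level. (ii) Given such decision atoms, an induction over the strata shows the derivable atoms $\reified\_s(d,\vec t)$ are exactly those with $d=\theta(s)$ and $d(\vec t)$ true; hence each rule produced by \metar (the sketched atom replaced by $\reified\_s(D,\dots)$ with $\textit{decision}\_s(D)$ added to the body) has the same models as the rule of $P$ with that sketched atom instantiated to $\theta(s)$ -- the special cases \sketchedeq, \sketchedplus and ${?}\textit{not}$ being the same argument over their fixed domains. (iii) The \metae step threads one fresh variable $E$ through every rule whose head predicate is reachable from an example predicate and guards it by $\textit{examples}(E)$; since no rule ever relates two distinct values of $E$, the splitting/module theorem for stratified programs gives that the perfect model of the resulting program is the disjoint union $\bigcup_i \{i\}\times M_i^{\theta}$, where $M_i^{\theta}$ is the perfect model of $(P\theta\setminus\mathit{constraints})\cup e_i$.

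Third, I would finish by analysing the split constraints \metac against this model, in both directions of the bijection. For a positive example $i$, some split rule $\leftarrow \textit{body},\textit{positive}(i)$ (one per original integrity constraint) is violated in $\bigcup_i\{i\}\times M_i^{\theta}$ iff the corresponding body holds in $M_i^{\theta}$ iff some integrity constraint of $P\theta$ is violated in the perfect model of $(P\theta\setminus\mathit{constraints})\cup e_i$, i.e.\ iff $P\theta\cup\{e_i\}$ has no answer set; hence consistency of the positive part of $T$ is equivalent to ``$P\theta\cup\{e\}$ has an answer set for every $e\in\eplus$''. For a negative example $i$, some rule $\textit{negsat}(i)\leftarrow\textit{body},\textit{negative}(i)$ derives $\textit{negsat}(i)$ iff some constraint body holds in $M_i^{\theta}$, and $\leftarrow\textit{negative}(i),\textit{not}~\textit{negsat}(i)$ forces $\textit{negsat}(i)$ to hold; hence consistency of the negative part of $T$ is equivalent to ``$P\theta\cup\{e\}$ has no answer set for every $e\in\eminus$''. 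Reading $\theta$ off the decision atoms and combining the equivalences: $T$ has an answer set iff there is $\theta$ with all positives satisfiable and all negatives unsatisfiable iff $(P,S,D,f,\eplus,\eminus)$ has a satisfying substitution.

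\textbf{Main obstacle.} I expect lemma (iii) to be the crux: showing rigorously that the $E$-indexed program splits into $|\eplus|+|\eminus|$ mutually non-interacting copies of $P\theta$. One must verify that recursion (e.g.\ through \textit{reached}) never carries an atom from one example index to another, that the set $\mathit{SP}$ of predicates receiving the extra argument is closed under the dependency relation so the $\textit{examples}(E)$ guards -- and the $E$-argument on the reification rules of sketched predicates that depend on example predicates -- are placed consistently, and that adding the guards and the fresh argument preserves stratification, so that the perfect-model/answer-set coincidence invoked throughout remains valid. A smaller point worth stating explicitly is that $T$ is non-stratified only through the \metad cardinality rules; this is exactly what makes its answer sets correspond one-to-one with (substitution, per-example perfect model) pairs and reduces everything else to the well-understood stratified fragment.
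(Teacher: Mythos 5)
Your plan is correct and follows what is essentially the intended argument: the paper defers the proof to supplementary material, but the proof it relies on is exactly your decomposition — the cardinality rules of the substitution-generation step put answer sets in one-to-one correspondence with substitutions $\theta$, the reification rules make the remaining (stratified, choice-free) program behave like $P\theta$, the example-indexing step splits it into independent per-example copies, and the split constraints then enforce satisfiability of every positive and unsatisfiability of every negative example, with preferences irrelevant to the existence question. The one step worth stating explicitly (which you do in fact use) is that, since SkASP programs are stratified and choice-free, $P\theta\cup\{e\}$ can fail to have an answer set only through a violated integrity constraint, so checking the constraint bodies against the unique perfect model of each example copy is exhaustive; with that spelled out, nothing essential is missing.
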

Interestingly, the sketched ASP problem is in the same complexity class as the original ASP program.
\begin{theorem}[Complexity of Sketched Answer Set Problem]
  The decision version of propositional SkASP is in NP.
  \label{theorem:complexity_sat}
\end{theorem}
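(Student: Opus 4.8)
The plan is to exhibit a nondeterministic polynomial-time decision procedure. I would first note that, as remarked just before the theorem, the preferences $f$ are irrelevant to the decision version: the substitutions ordered by $f$ form a finite strict partial order, so if the set of substitutions $\theta\colon S\mapsto D$ for which $P\theta\cup\{e\}$ is consistent for every $e\in\eplus$ and inconsistent for every $e\in\eminus$ is nonempty, it contains a maximal --- hence preferred, i.e.\ non-dominated --- element. Thus the decision version is equivalent to asking whether \emph{some} substitution meeting these answer-set conditions exists. Since $P$ is propositional (already ground) and the domains $D$ are given explicitly, such a $\theta$ has description length polynomial in the input and is a legitimate NP certificate; it then remains to verify the answer-set conditions for a guessed $\theta$ in polynomial time.

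The delicate point --- and the step I expect to be the main obstacle --- is the negative examples: naively, checking that $P\theta\cup\{e\}$ has \emph{no} answer set is a co-NP query, which would not fit inside a single guess-and-check. Stratification resolves this. For a fixed $\theta$, $P\theta$ is a stratified normal program, and adjoining the ground facts of an example, as well as having only integrity constraints as the non-definite part, preserves stratification; hence the program obtained from $P\theta\cup\{e\}$ by deleting its integrity constraints has a unique answer set, its perfect model, computable stratum by stratum in polynomial time (and likewise in the presence of stratified aggregates). Therefore $P\theta\cup\{e\}$ has an answer set iff this unique model satisfies all of the integrity constraints --- a polynomial test --- and has none iff it violates at least one. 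So both the positive and the negative conditions become polynomial-time checkable once $\theta$ is fixed.

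Assembling the algorithm: guess $\theta\colon S\mapsto D$; for each $e\in\eplus$ compute the perfect model of $P\theta\cup\{e\}$ stripped of its constraints and keep this example only if every constraint holds; for each $e\in\eminus$ compute the same model and keep the example only if some constraint is violated; accept iff all examples are kept. Every step runs in time polynomial in $|P|+|S|+|D|+\sum_e|e|$, so the procedure is an NP algorithm and propositional SkASP lies in NP.

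A shorter alternative is to invoke Theorem~\ref{theorem:rewriting}: the meta-program $T=\metae \cup \metad \cup \metar \cup \metac$ has size (and ground size) polynomial in the input --- each of the four rewriting steps adds only polynomially many rules and facts of polynomial size, the cardinality choice rules of the substitution-generation step being bounded by the domain sizes --- and $T$ has no disjunction in any rule head, so deciding whether it has an answer set is in NP; by the theorem this settles the decision version of SkASP. The only thing needing care in this route is the polynomial size bound on $T$, which is routine.
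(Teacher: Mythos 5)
Your proposal is correct, and its primary argument is genuinely different from the paper's. The paper's entire proof is your ``shorter alternative'': it simply observes that SkASP encodes into a fragment of ASP that is in NP (i.e.\ it leans on the rewriting of Theorem~\ref{theorem:rewriting} into a non-disjunctive program, with the polynomial size of $T$ and the irrelevance of preferences left implicit or deferred to the supplementary material). Your main route instead gives a self-contained nondeterministic algorithm: guess $\theta$, and use the facts that SkASP programs are stratified and forbid choice rules to reduce both the positive-example check and the negative-example check to evaluating the integrity constraints on the unique (perfect) model of the constraint-free part, computable in polynomial time. This buys something the paper's one-liner does not make explicit: it isolates exactly why the negative examples --- a priori a coNP query per example, which would naively push the problem toward $\Sigma_2^P$ --- stay harmless, namely determinism of the stratified, choice-free fragment; it also avoids any dependence on the soundness/completeness of the rewriting. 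Conversely, the paper's route is shorter and reuses machinery already established, at the cost of needing the (routine but unstated) polynomial bound on the ground size of $T$ and, like your route, the observation that preferences do not affect the decision version --- which you justify cleanly via maximal elements of the finite dominance order. One minor point of hygiene in your direct route: an integrity constraint is formally a rule with $\bot$ (or a fresh head under negation), so saying it ``preserves stratification'' is a slight abuse; the standard and correct reading, which you in fact use, is to treat constraints as filters on the answer sets of the constraint-free stratified program, and with that phrasing the argument is watertight.
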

\begin{proof} 
Follows from the encoding of SkASP into a fragment of ASP which is in NP. 
\end{proof}
\textbf{Dealing with preferences} 
Preferences are, as we shall show in our experiments, useful to restrict the number of solutions.
We have implemented preferences using a post-processing approach (which will also allow to apply the schema to other formalisms such as CP or IDP \cite{idp}).
We first generate the set of all solutions $O$ (without taking into account the preferences), and then post-process $O$. Basically, we filter out
from $O$ any solution that is not preferred (using tests on pairs $(o,o')$ from $O \times O$). 
The preferences introduce a partial order on the solutions. 
\sergey{\addedbegin For example, \addedend assume ${?}p$ (${?}q$) can take value  $p_1$ ($q_1$) with preference of $1$ and $p_2$ ($q_2$) with $2$. If $(p_1,q_2)$ and $(p_2,q_1)$ are the only solutions, they are kept because they are incomparable -- $(1,2)$ is not dominated by $(2,1)$ (and vice versa). If $(p_1,q_1)$ is also solution, 
$(p_1,q_2)$ and $(p_2,q_1)$ are removed because they are dominated by $(p_1,q_1)$. }

\addedbegin
While the number of potential Answer Sets is in general exponential for
a sketched ASP, the number of programs actually satisfying the
examples is typically rather small (in our experiments, below 10000-20000). If that is not the case, then the problem is 
under-constrained and it needs more examples. No user would be able to
go over a million of proposed programs. 


\addedend



\section{System Extension: Aggregates and Use-Case}\label{section:aggregates}
An aggregate \textit{\#agg} is a function from a set of tuples to an integer. For example,$\#\textit{count}\{ Column,Row: \textit{queen}(Column,Row) \}$ counts the number of instances \textit{queen(Column,Row)} at the tuple level. Aggregates are often useful for modeling. 
However, adding aggregates to non-disjunctive ASP raises the complexity of an AS existence check, unless aggregate dependencies are stratified \cite{aggregates_complexity}. It is possible to add aggregates into our system under the following restrictions: stratified case, aggregates occur in the body in the form ${N = \#\textit{agg}\{ \dots \}}$, sketched with the keyword \verb|?#| , where \textit{\#agg} can be \textit{max}, \textit{min}, \textit{count} and \textit{sum}. This immediately allows us  to model problems such as Equal Subset Sum (for details, see the repository), where one needs to split a list of values, specified as a binary predicate   \verb|val(ID,Value)| into two subsets, such as \verb|subset1(ID)| (and \verb|subset2(ID)| respectively), such that the sum of both subsets is equal. Essentially, we sketch the constraint of the form: 

{\footnotesize\verb|:- S1 != S2, S1 = ?#{V,X:val(X,V),subset1(X)}...|}

Formally, each aggregate can be seen as an expression of the form:
\begin{equation*}
    \begin{aligned}
    S = \#\textit{agg}\{ Z_1,\dots,Z_n: \textit{cond}(\overbrace{X_1,\dots,X_k}^{\textit{internal}}, \overbrace{Y_1,\dots,Y_h}^{\textit{external}},\overbrace{Z_1,\dots,Z_n)}^{\textit{aggregated}} \},\\ \textit{external}(Y_1,\dots,Y_h)
    \end{aligned}
\end{equation*}
	where $S$ is an integer output, and $Y_1,\dots,Y_h$, shortened as $\bar Y$ ($\bar X$ and $\bar Z$ are the same kind of shortening) are bound to other atoms in the rule, to which we refer as \textit{external($\bar Y$)} ("external" with respect to the condition in the aggregate; it is simply shortening for a conjunction of atoms, which share variables with the condition in the predicate). 

To give an example of $\bar X, \bar Y, \bar Z$ in a simple context: if we were to compute an average salary per department in a company, we might have written a rule of the form:

\begin{Verbatim}[fontsize=\small]
avg_sal(A,D) :- A = #avg{S,N: salaries(N,S,D)}, 
                                 department(D).
\end{Verbatim}
Then, $\bar Z$ consists of the variable \verb|S| and \verb|D| is the external variable (with respect to the condition in the aggregate), i.e., $\bar Y$ and $\bar X$ is composed out of the variable \verb|N|, since it is neither used in the aggregation, nor in the other atoms outside of the aggregate.

A sketched aggregate $?\#$, can be reified similarly to the regular sketched atoms, i.e.:
\begin{equation*}
  \textit{reified}(S,sum,\bar Y) \leftarrow S = \#\textit{sum} \{ \bar Z : \textit{cond}(\bar X, \bar Y, \bar Z) \}, \textit{external}(\bar Y).
\end{equation*}
similarly for other aggregate functions; the same rules, e.g., the example extension, apply to aggregate reification.

\sergey{With aggregates we can sketch a significantly larger class of problems. Consider the problem from the Functional Pearls Collection: ``Finding celebrities problem'' \cite{celebrity_problem}\footnote {\scriptsize ASP code: \url{hakank.org/answer_set_programming/finding_celebrities.lp4}}. Problem statement: ``Given a list of people at a party and for each person the list of people they know at the party, we want to find the celebrities at the party.
A celebrity is a person that everybody at the party knows but that only knows other celebrities. At least one celebrity is present at the party.'' The sketch core looks as follows (names are shortened):}
\begin{Verbatim}[fontsize=\small]
n(N) :- N = ?#{ P : p(P) }.
:- c(C), p(C), n(N), S = ?#{P : k(P,C), p(P)}, 
                                      S < N-1.
:- c(C), p(C), not c(P), k(C,P).
\end{Verbatim} 
The last rule is an integrity constraint verifying that no celebrity, \texttt{c}, knows a person who is not a celebrity. The first line sketches a rule that should find what aggregation metric on the people (unary predicate person, \texttt{p}) should be used in the problem. The sketched rule in the second line makes use of this metric, denoted as \texttt{n}, and says that an aggregation should be performed on the binary "knows" predicate, \texttt{k}, (indicating that two persons know each other); so the outcome of the sketched aggregation on the connection between people should be compared to an overall metric on all people individually.

\section{Experimental Evaluation}\label{sec:experiments} 

\newcommand{\scalefigures}{0.99}

\begin{table}[h] \rowcolors{2}{gray!25}{white}
\centering
\scriptsize
\begin{tabular}{l | c c c c c c}
\textbf{Problem} & \textbf{\# Sketched} &\textbf{\# ?=}& \textbf{\# ?+} & \textbf{\# ?not} &\textbf{\# ?p} & \textbf{\# Rules}\\\hline
Graph Clique            &        3              &   1  &  0    &    0    &   2   &    4    \\
3D Matching             &        3              &   3  &  0    &    0    &   0   &    1    \\
Graph Coloring          &        7              &   4  &  0    &    0    &   3   &    2    \\
Domination Set          &        3              &   0  &  0    &    1    &   2   &    5    \\
Exact Cover             &        7              &   2  &  0    &    1    &   4   &    3    \\
Sudoku                  &        5              &   4  &  0    &    1    &   0   &    4    \\
B\&W Queens             &        5              &   3  &  2    &    0    &   0   &    3    \\
Hitting Set             &        3              &   0  &  0    &    1    &   2   &    2    \\
FAP                     &        3              &   0  &  0    &    1    &   2   &    3    \\
Feedback Arc Set        &        4              &   0  &  0    &    2    &   2   &    3    \\
Latin Square            &        4              &   4  &  0    &    0    &   0   &    2    \\
Edge Domination         &        3              &   0  &  0    &    1    &   2   &    5    \\
FAP                     &        5              &   3  &  2    &    0    &   0   &    3    \\
Set Packing             &        4              &   2  &  0    &    0    &   2   &    1    \\
Clique Cover            &        4              &   3  &  0    &    1    &   0   &    3    \\
Feedback Set            &        5              &   0  &  0    &    5    &   0   &    3    \\
Edge Coloring           &        3              &   3  &  0    &    0    &   0   &    3    \\
Set Splitting           &        5              &   2  &  0    &    1    &   2   &    3    \\
N Queens                &        6              &   4  &  2    &    0    &   0   &    3    \\
Vertex Cover            &        3              &   0  &  0    &    1    &   2   &    4    \\
Subg. Isomorph.         &        5              &   2  &  0    &    1    &   2   &    4    \\
\end{tabular}

\caption{\addedbegin Dataset summary: the number of sketched variables, of rules, of particular types of sketched variables, e.g., ``\# ?not'', indicates how many atoms with the sketched negation are in the program.\addedend}
\label{tab:dataset_description}
\end{table}
For the experimental evaluation we have created a dataset consisting 
of 21 classical combinatorial problems among which most are
NP-complete. \addedbegin For the problem list and precise sketch specifications used in the experiments, we refer to Table \ref{tab:dataset_description}. \addedend
All problems, their code, and implementation details, can be found in the accompanying Github repository: {\url{https://github.com/SergeyParamonov/sketching}}

\textbf{Dataset of Sketches.}
The key challenge in evaluating program synthesis techniques such as SkASP is the absence of 
benchmark datasets (as available in more typical machine learning tasks). At the same time,
although there are many example ASP programs available in blogs, books or come with software,
these typically employ advanced features (such as incremental grounding, optimization or external sources) which are not supported by SkASP as yet.
Therefore we had to design our own dataset in a systematic way (and put it in the public domain).
The dataset is based on a systematic concept (the 21 problems by Karp). When 
we could find encodings for these problem (such as Sudoku in Figure \ref{lst:sudoku} from \cite{asp_tutorial_sudoku} and Hamiltonian Cycle in Figure \ref{lst:ham} from \cite{ASPbook}) we took these problems, in all other cases we developed a solution according to the standard generate and test development methodology of ASP.
Specifically (see \qfive) we looked for different encodings in the public domain of ASP’s favorite -- the N-queens problem (these encoding can tackle even its NP-complete version \cite{complexity_nqueens}).

After creating all the ASP programs, we turned them into sketches by looking for meaningful opportunities to use 
sketched variables. We introduced sketched variables to replace operators (equalities and inequalities), to replace arithmetic (such as plus and minus)
and to decide whether to use negated literals or not, and to make abstraction of predicates for which another predicate existed with the same signature.

Finally, we had to select the examples in a meaningful way, that is, we selected examples that would be informative
(as a user of SkASP would also do). Positive examples were actually selected more or less random,
negative examples are meant to violate one or more of the properties of the problem. Furthermore, we also 
tried to select examples that carry different information (again as a user of SkASP would do). We selected between 4 and 7 examples for each model. Where relevant in the experiments, we sampled the sketched variables (e.g. \qfive) or the examples (e.g. \qthree)


\textbf{Experimental questions} \sergey{ are designed to evaluate how usable is SkASP in practice. 
  Users want in general to provide only a few examples (\qone-\qthree), to obtain a small number of solutions (ideally only one) (\qone-\qtwo), the examples should be small (\qfour), the solutions should be correct (all), want to know whether and when to use preferences (\qtwo), and how robust the technique is to changes in the encoding (\qfive) as it is well known in ASP that small changes in the encoding can have large effects. Finally, they are typically interested in how the learned programs change as the sketches become more complex (\qthree).
With this in mind,} we have designed and investigated the following experimental questions:
\begin{itemize}
\item \qone: What is the relationship between the number of examples and the number of solutions? How many examples does it take to converge?
\item \qtwo: Are preferences useful?
\item \qthree: What is the effect of the number of sketched variables on the convergence \sergey{and correctness of the learned programs}?
\item \sergey{\qfour: Do models learned on examples with small parameter values generalize to models with larger parameter values?}
\item \sergey{\qfive: What is the effect of encoding variations on the number of solutions and their correctness? }
\end{itemize}

\textbf{Implementation details and limitations.} The SkASP engine is written in Python 3.4 and requires pyasp. All examples have been run on a 64-bit Ubuntu 14.04, tested in Clingo 5.2.0. The current implementation does not support certain language constructs such as choice rules or optimization.

We use the \textit{default preferences} in the experiments for the built-in inequality sketch \sketchedeq: namely $=$ and $\neq$ have equal maximal preference. A user can redefine the preferences. Our experiments indicate that for other sketched types (e.g., arithmetic, etc) no default preferences are needed.

\begin{figure*}[!thb]
  \centering
  \begin{subfigure}[t]{0.32\textwidth}
    \includegraphics[width=\scalefigures\textwidth]{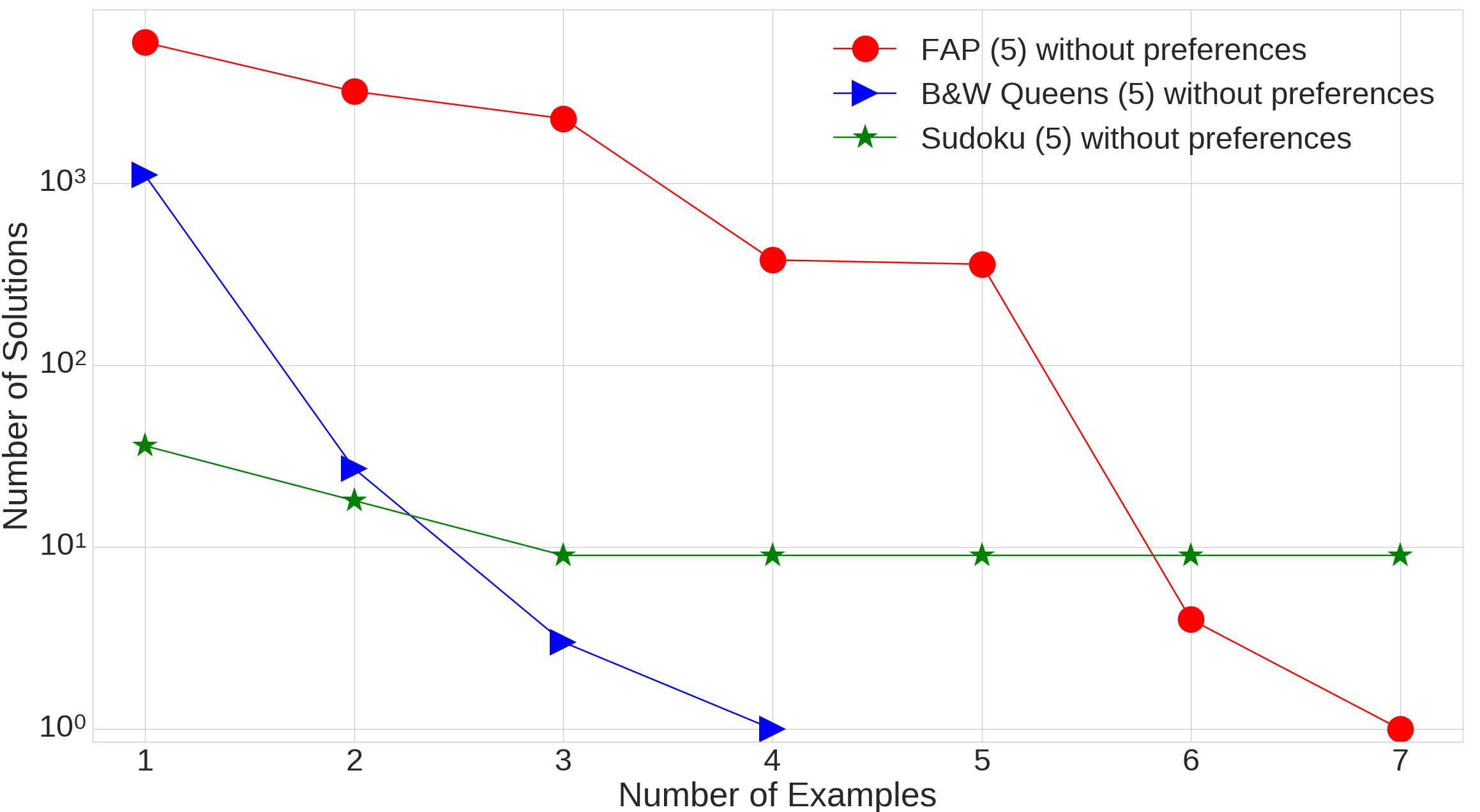}
    \caption{Convergence without preferences (with 5 sketched variables): B\&W Queens (Sketch \ref{lst:queens}) and FAP converge, while Sudoku does not}
    \label{fig:convergence_without_preferences}
  \end{subfigure}
  \hfill
  \begin{subfigure}[t]{0.32\textwidth}
    \includegraphics[width=\scalefigures\textwidth]{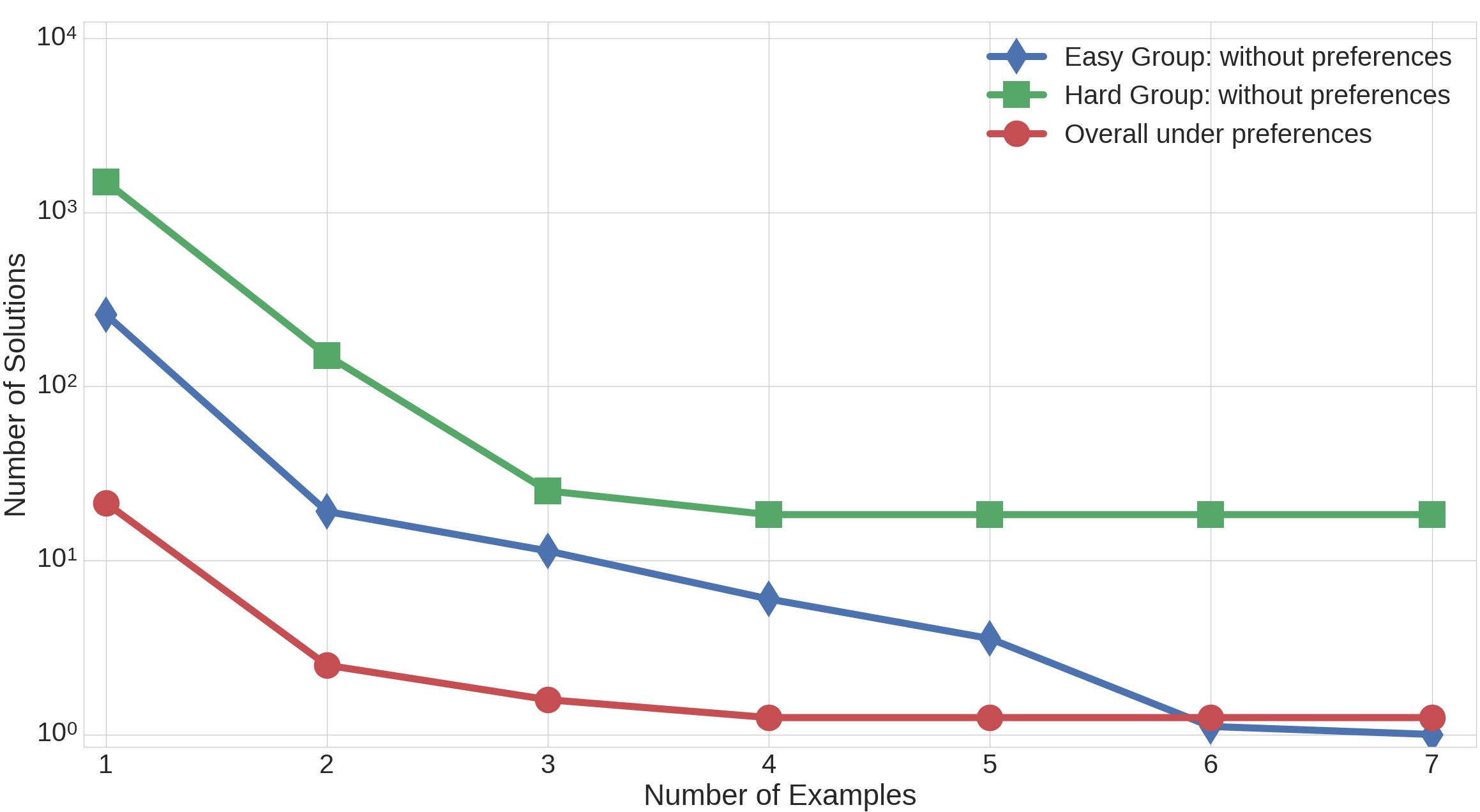}
    \caption{Average number of solutions over the dataset, split into the \textit{easy} group converging without preferences and \textit{hard} not converging }
    \label{fig:preferences_effect}
  \end{subfigure}
  \hfill
  \begin{subfigure}[t]{0.32\textwidth}
    \includegraphics[width=\scalefigures\textwidth]{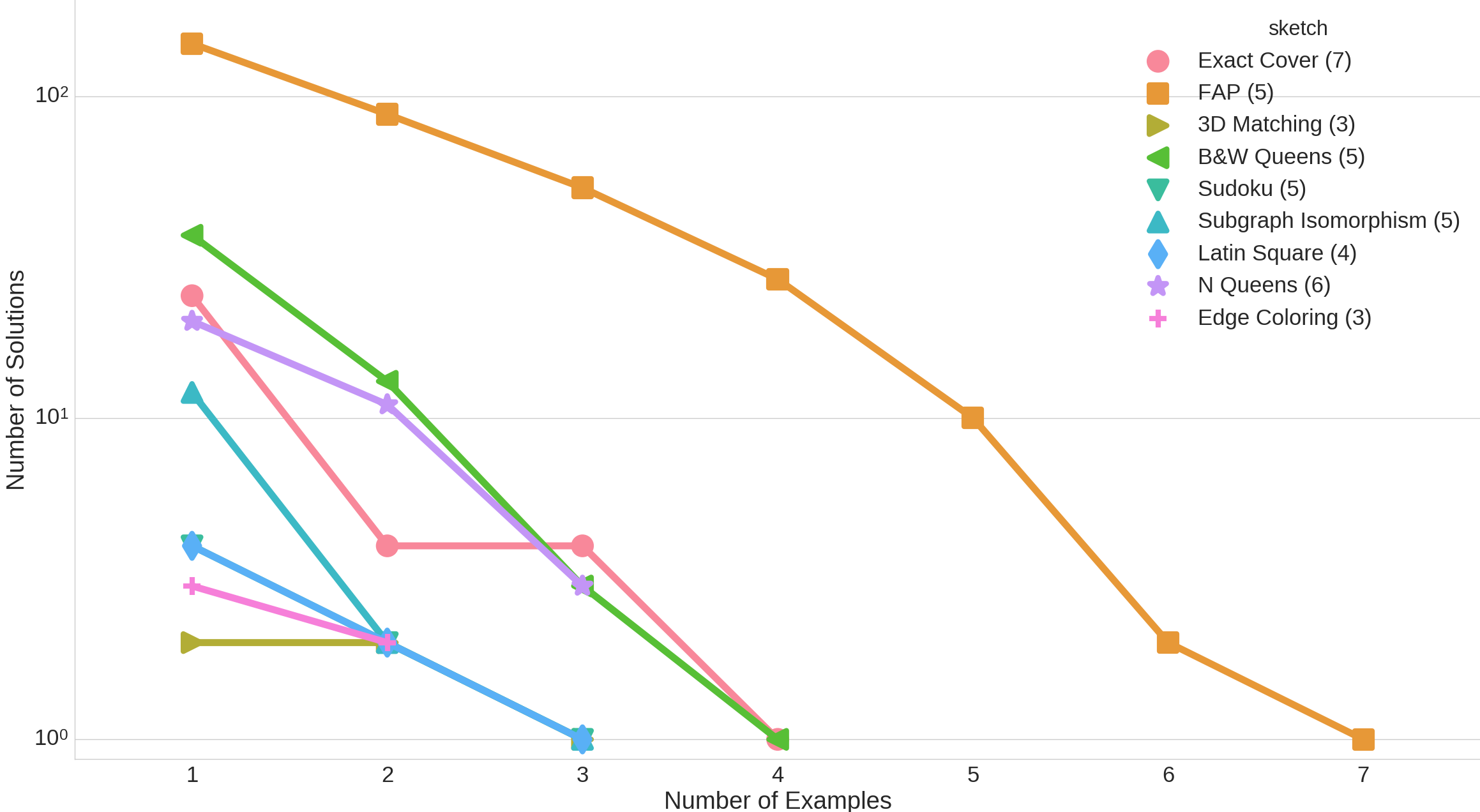}
    \caption{Between 2 and 7 examples are needed to obtain a unique solution (or a small group of equivalent ones) under preferences.}
    \label{fig:number_of_solutions}
  \end{subfigure}

  \begin{subfigure}[t]{0.32\textwidth}
    \includegraphics[width=\scalefigures\textwidth]{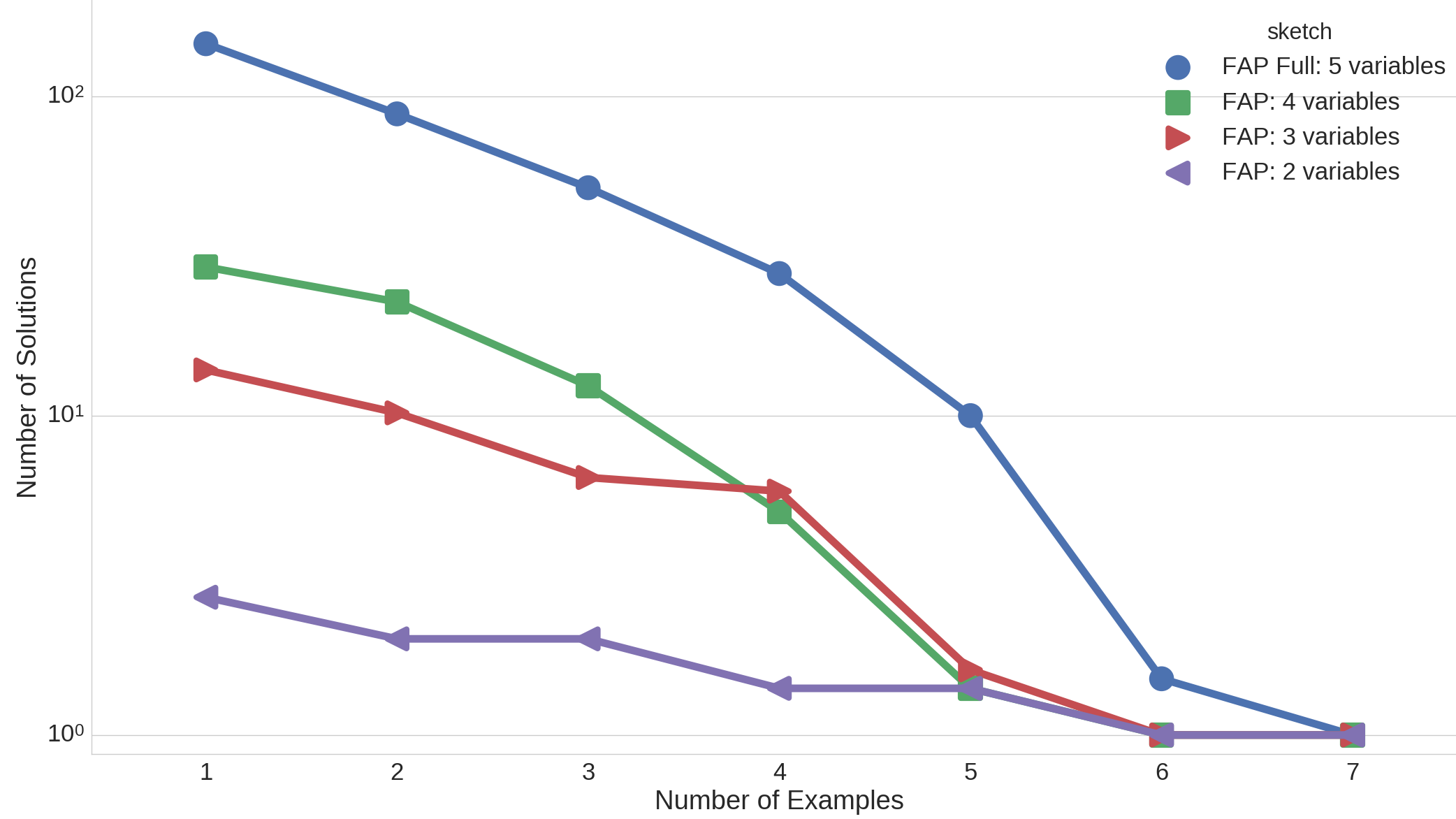}
    \caption{The effect of the number of sketched variables on the solutions with preferences}
    \label{fig:fap_with_preferences}
  \end{subfigure}
  \hfill
  \begin{subfigure}[t]{0.32\textwidth}
    \includegraphics[width=\scalefigures\linewidth]{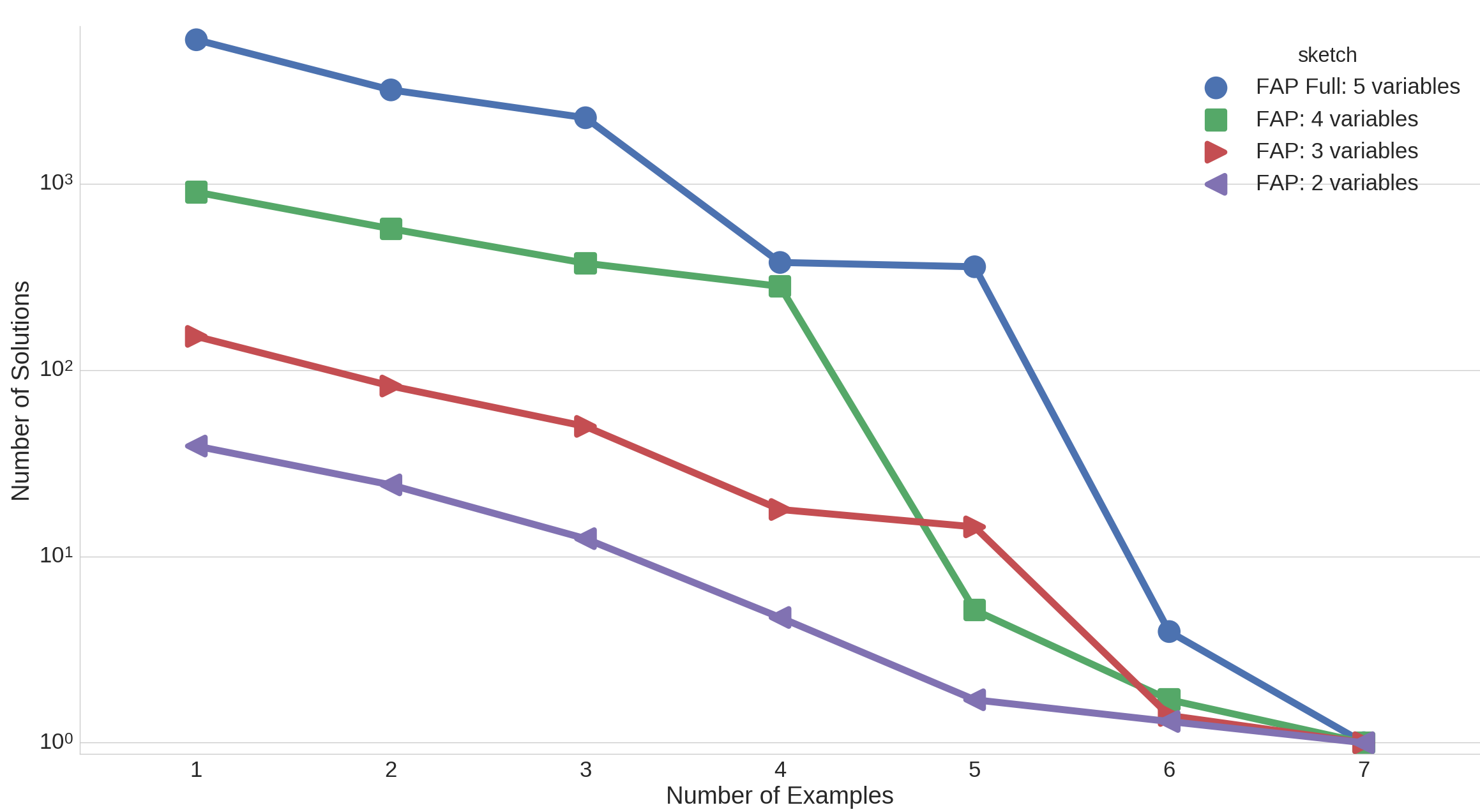}
    \caption{The effect of the number of sketched variables on the solutions without preferences}
    \label{fig:fap_without_preferences}
  \end{subfigure}
  \hfill
  \begin{subfigure}[t]{0.32\textwidth}
    \includegraphics[width=\scalefigures\linewidth]{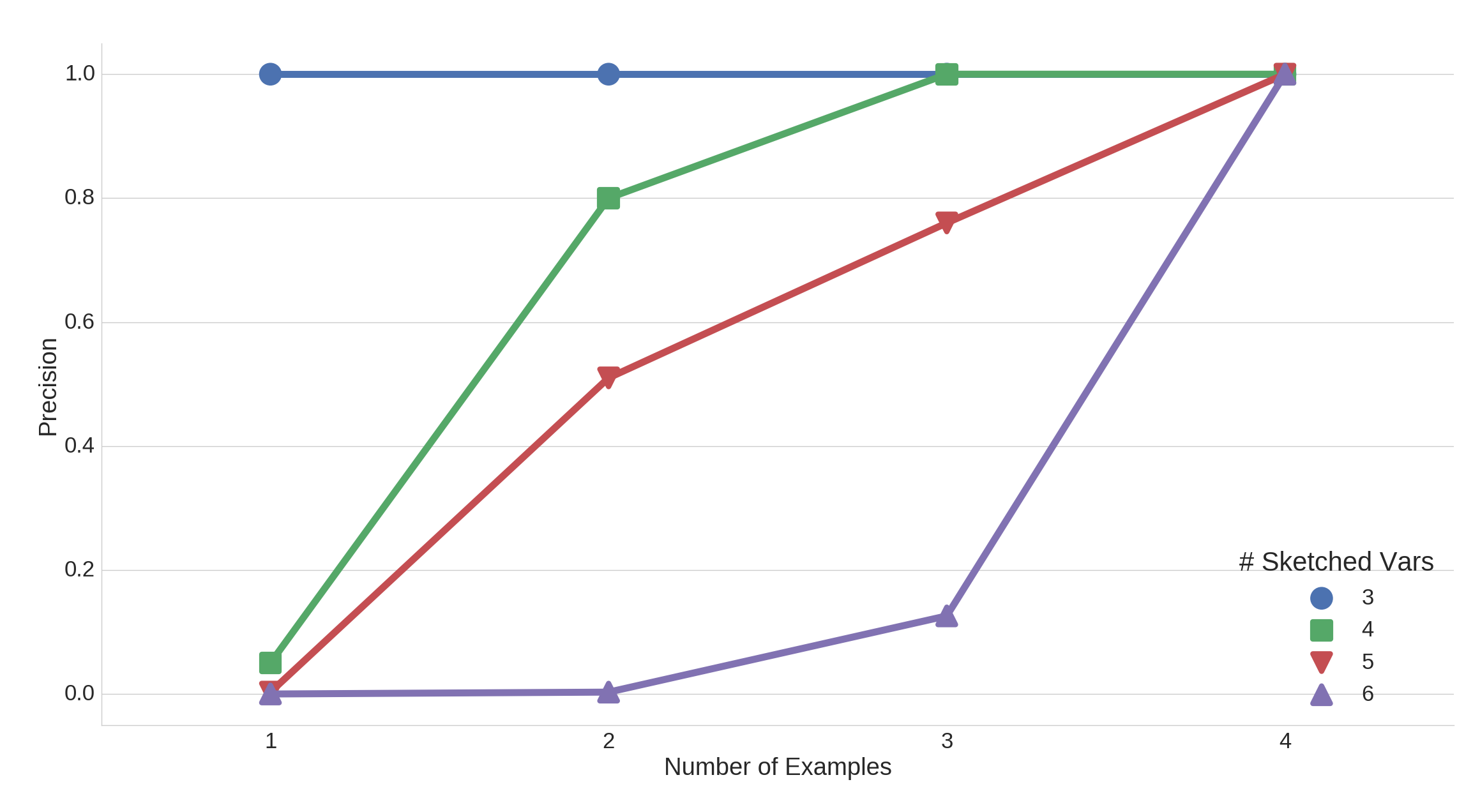}
    \caption{\sergey{Dependency between the number of sketched variables and precision}}
    \label{fig:precision_n_queens}
  \end{subfigure}
  \caption{Addressing experimental questions \qone-\qthree. \qone and \qtwo: convergence without preferences (a), across the dataset (b), for each problem (c). \qthree: the effect of different number of sketched variable on the number of solutions and precision, FAP (d, e); $N$-queens (f). log-scale (a-e)}
\end{figure*}

\begin{figure*}[!thb]
  \centering
  \begin{subfigure}[t]{0.32\textwidth}
    \includegraphics[width=\scalefigures\textwidth]{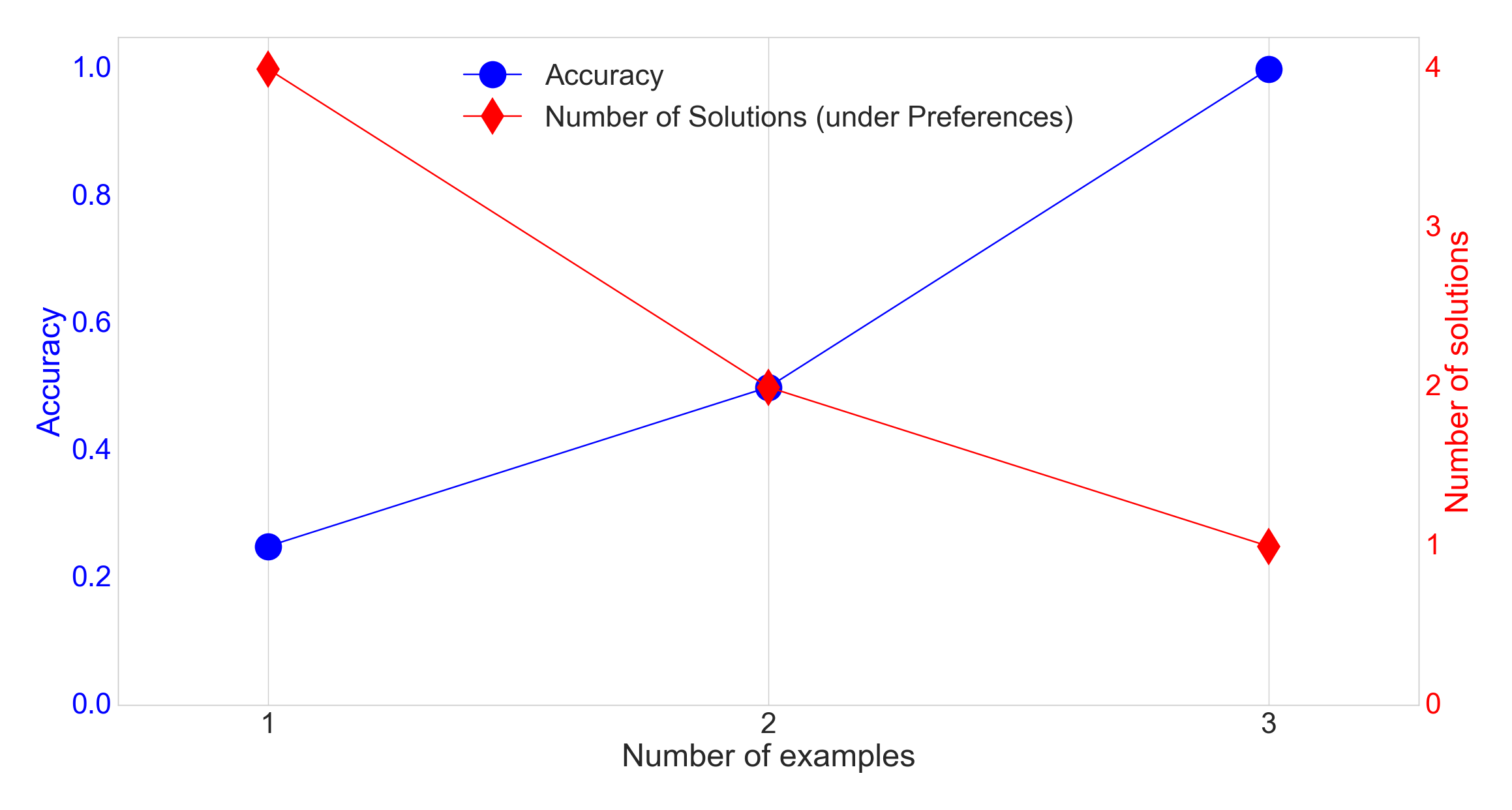}
    \caption{\sergey{Measuring the generalized accuracy and number of solutions. Latin Square $4 \times 4$.}}
    \label{fig:latin_square_generalization}
  \end{subfigure}
  \hfill
  \begin{subfigure}[t]{0.32\textwidth}
    \includegraphics[width=\scalefigures\linewidth]{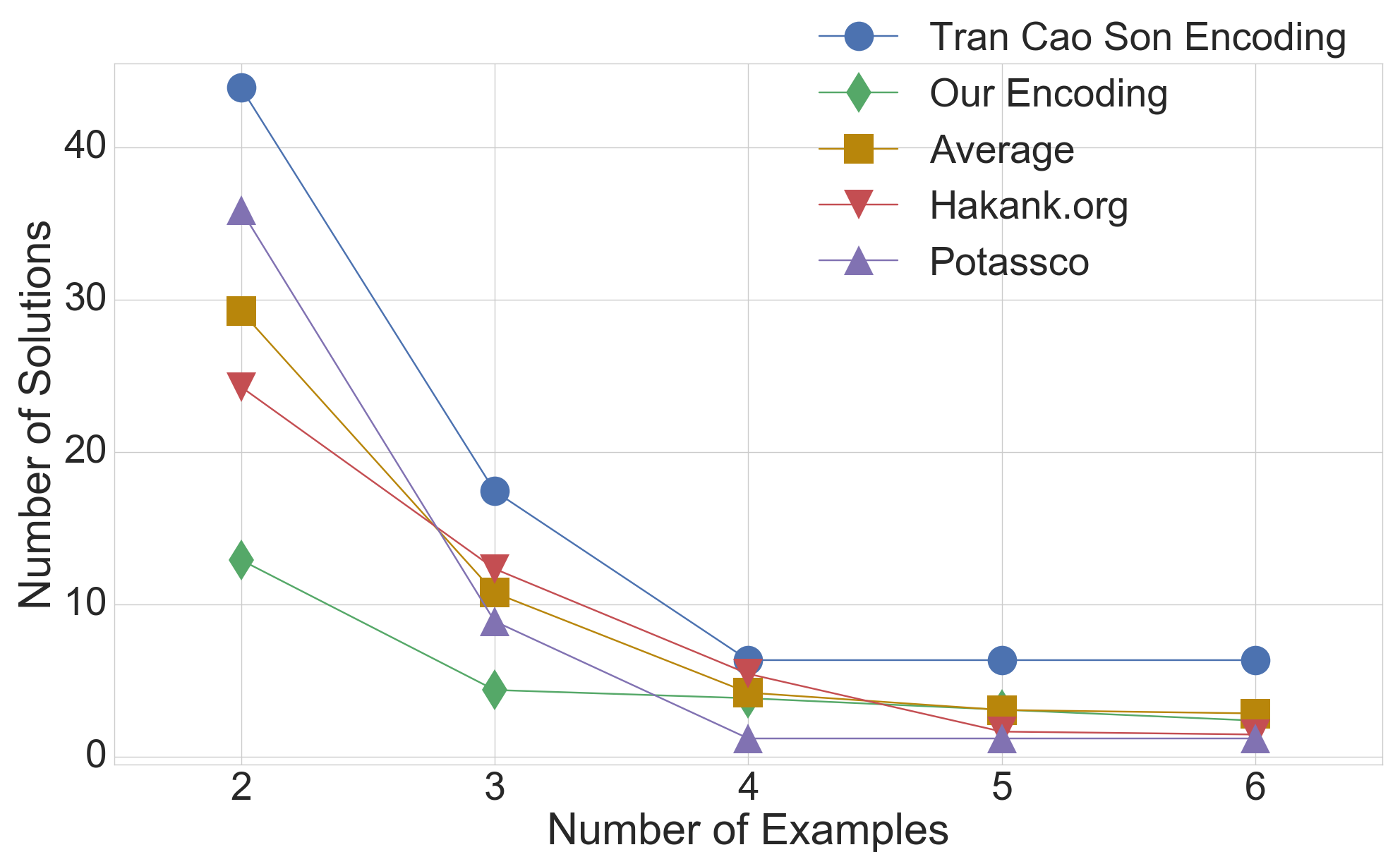}
    \caption{\sergey{The number of solutions (default preferences) for various $N$-queen encodings}}
    \label{fig:systematic_solutions}
  \end{subfigure}
  \hfill
  \begin{subfigure}[t]{0.32\textwidth}
    \includegraphics[width=\scalefigures\textwidth]{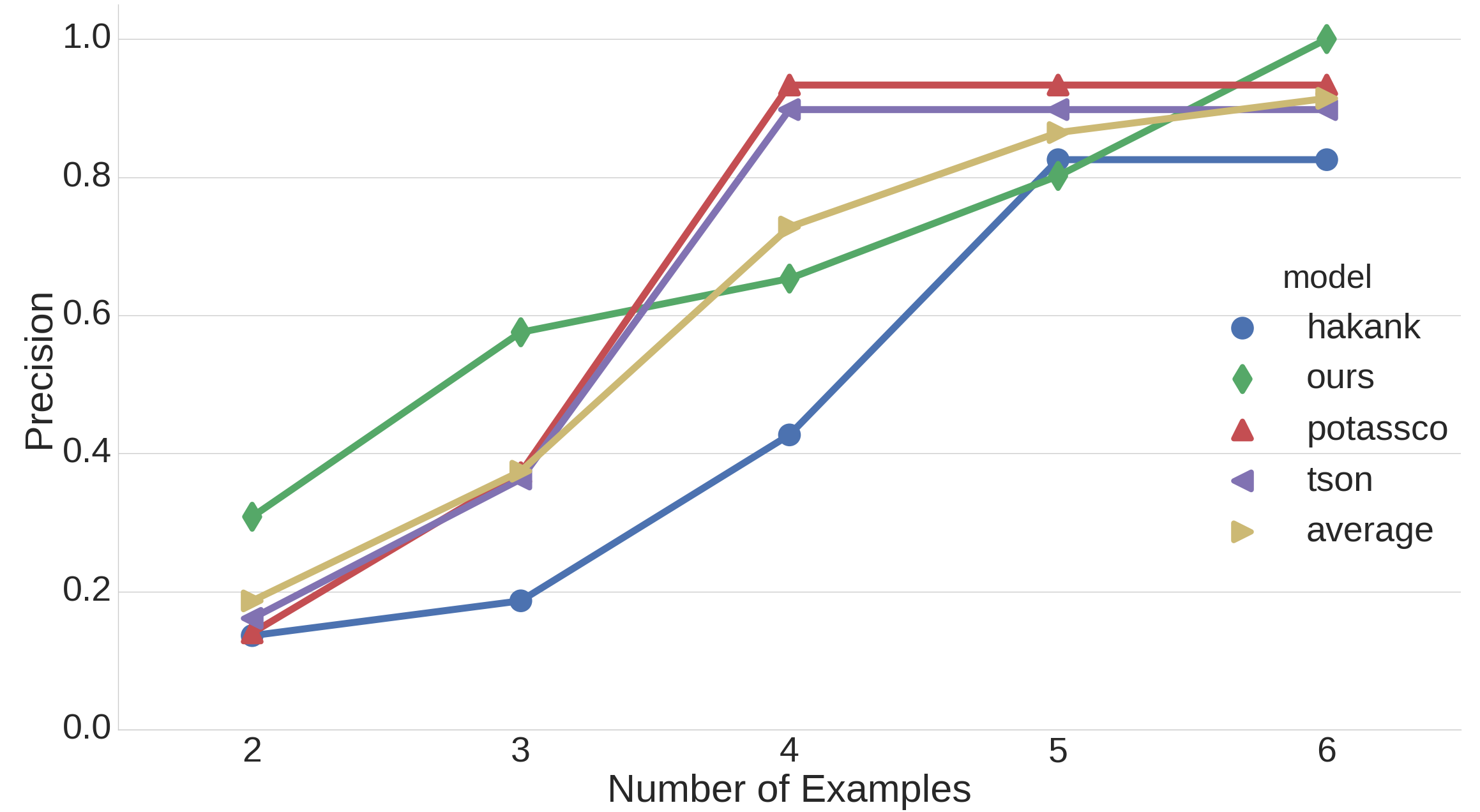}
    \caption{\sergey{Correctness for $N$-queens various encodings (precision)}}
    \label{fig:systematic_precision}
  \end{subfigure}
  \caption{Addressing experimental questions \qfour and \qfive: generalization accuracy on Latin Square (a), the effect of the encoding variations on the number of solutions (b), on precision (c).}
\end{figure*}

We investigate \qone by measuring the impact of the number of examples 
on the number of solutions of the 21 SkASP problems. 
An interesting observation is that even if the user wants to 
solve, say the Latin Square $20\times 20$, she does not need to provide 
examples of size $20\cdot 20= 400$. She can simply provide $3\times 3$ examples 
and our SkASP problem will learn the generic 
Latin Square program (see Figure \ref{lst:latin_square}). 

Figure \ref{fig:convergence_without_preferences} shows how the number of solutions 
for some of our 21 SkASP problems depends on the number of examples. 
In some cases, 7 examples are sufficient 
to converge to a single solution e.g., FAP, B\&W Queens. 

On some other problems, however, after 7 examples there still remain 
many solutions (on average 18 for problems that do not converge). 
Figure \ref{fig:preferences_effect} reports the same information as Figure \ref{fig:convergence_without_preferences} for all 21 problems: the average number of solutions; the average on the 9 that 
converge within 7 examples, referred to as the \textit{easy problems}; and the average on the 12 that still have several 
solutions after 7 examples, referred to as the \textit{hard problems}. 
When SkASP  does not converge to a unique solution, this leaves 
the user with choices, often amongst equivalent ASP programs, which is undesirable.  

For problems that do not converge after a few examples, we propose to 
use preferences, as provided by our SkASP framework. 
We use the default preference described earlier.

We investigate \qtwo by  measuring again the impact of the number of examples 
on the number of solutions. 
In Figure \ref{fig:number_of_solutions}, we observe that all problems 
have converged in less than 7 examples (under default preferences). 
The impact of preferences on the speed 
of convergence is even more visible on the whole set of problems, as reported in 
Figure \ref{fig:preferences_effect}. 
The number of solutions with preferences is smaller, and often much smaller 
than without preferences, whatever the  
number of examples provided. 
With preferences, all our 21 problems are learned with 7 examples.

To analyze the number of solutions in \qthree, we look into the convergence of FAP  
by varying the number of sketched variables. The original 
sketched program of FAP contains 5 sketched variables. 
We vary it from 2 to 5 by turning 3, 2, 1, or 0 sketched variables into 
their actual value (chosen randomly and repeated over multiple runs). As expected, in Figure \ref{fig:fap_with_preferences}, we observe that 
the more there are sketched variables in the SkASP, the slower the number 
of solutions decreases. Furthermore, the number of sketched variables has a greater impact on the convergence without preferences, as we see in Figure \ref{fig:fap_without_preferences}. After 3-5 examples under preferences we have fewer than 10 solutions, while without preferences there are still dozens or hundreds of solutions.

\sergey{To analyze \textit{correctness} in \qthree, we need first to define it. Informally, we mean that the program classifies arbitrary examples correctly, i.e., positive as positive, etc. A typical metric to measure this is \textit{accuracy}. However, there are no well defined arbitrary positive and negative examples for the most problems: what is an arbitrary random example for Feedback Arc Cover? Problems like Sudoku and $N$-queens do have standard examples because they are parameterized with a single parameter, which has a default value. Furthermore, for the standard $8$-queens we know all solutions analytically, i.e., 92 combinations. Another issue is that the negative and positive classes are unbalanced. The usual way to address this issue is to use \textit{precision}, i.e., $\frac{\text{True Positive}}{\text{True+False Positives}}$. (Recall is typically one because the incorrect programs produce way too many solutions that include the correct ones.) In Figure \ref{fig:precision_n_queens}, we see that in all cases we were able to reach the correct solution (here the locations of sketched variables were fixed as specified in the dataset); while increasing the number of sketched variables generally decreases the precision.} 

\sergey{To investigate \qfour, we have used the Latin Square from Listing \ref{lst:latin_square}. We have used examples for Latin Square $3\times 3$, and verified its correctness on Latin Square $4 \times 4$ (which can be checked analytically because  all solutions are known). We have discovered, that there is an inverse dependency between number of solutions and accuracy, see Figure \ref{fig:latin_square_generalization}. This happens because there are typically very few useful or ``intended'' programs while there are lot of incorrect ones.}

To investigate \qfive, we have focused on the $N$-queens problem and collected several encodings from multiple sources: Potascco, Hakank.org, an ASP course by Tran Cao Son\footnote{\url{www.hakank.org/answer_set_programming/nqueens.lp}\\\url{www.cs.uni-potsdam.de/~torsten/Lehre/ASP/Folien/asp-handout.pdf}\\\url{www.cs.nmsu.edu/~tson/tutorials/asp-tutorial.pdf}} and our encoding. 
Whereas all the encodings model the same problem they show significant variations 
in  expressing constraints. To reduce the bias in how the sketched variables are introduced and systematically measure the parameters, we pick sketched variables randomly (inequalities and arithmetic) and use the same examples from our dataset (randomly picking the correct amount) for all models.

In Figure \ref{fig:systematic_solutions}, while there is a certain variation in the number of solutions, they demonstrate similar behavior. \sergey{For each encoding we have introduced $5$ sketched variables and measured the number of solutions and precision. In Figure \ref{fig:systematic_precision} we see that there is indeed a slight variation in precision, with 3 out of 4 clearly reaching above 90\% precision, one reaching 100\% and one getting 82\%. Thus, despite variations in encoding, they generaly behave similarly on the key metrics. }  The results have been averaged over $100$ runs.

Overall, we observe that only few examples are needed to converge to a unique or a small group of equivalent solutions. An example where such equivalent solutions are found is the \text{edge coloring problem}, where two equivalent (for undirected graphs) constraints are found:
\begin{equation*}
  \begin{aligned}
    &\leftarrow \textit{color}(X,Y_1,C), \textit{color}(X,Y_2,C),~Y_1 \neq Y_2.\\
    &\leftarrow \textit{color}(X_1,Y,C), \textit{color}(X_2,Y,C),~X_1 \neq X_2.
  \end{aligned}
\end{equation*}
For this problem these two constraints are equivalent and cannot be differentiated by any valid example. 

An interesting observation we made on these experiments is that the 
hardness (e.g., in terms of runtime) of searching for a solution of 
a problem is not directly connected to the hardness of learning the 
constraints of this problem. This can be explained by the incomparability of the search spaces. 
SkASP searches through the search space of sketched variables, 
which is usually much smaller than  the search space of the set of decision variables 
of the problem to learn.

\section{Related Work}\label{sec:related_work}
The problem of sketched ASP is related to a number of topics.
First, the idea of sketching originates from the area of programming languages, where
it relates to so called self-completing programs \cite{sketching_phd_thesis}, typically in C \cite{sketching_original} and in Java \cite{jsketch}, where an imperative program has a question mark instead of a constant and a programmer provides a number of examples to find the right substitution for it. While sketching has been used in imperative programming languages, it has -- to the best of the authors' knowledge -- never been applied to ASP and constraint programming.  What is also new is that the sketched ASP is solved using a standard ASP solver, i.e., ASP itself.

Second, there is a connection to the field of inductive (logic) programming (ILP)  \cite{ilp_book,ilp_original,gulwani2015inductive}.
An example is meta-interpretive learning \cite{MuggletonMLJ14,MuggletonMLJ15} where 
a Prolog program is learned based on a set of higher-order rules, which  act as a kind of template
that can be used to complete the program. 
However, meta-interpretive learning differs from SkASP in that it induces full programs 
and pursues as other ILP methods a search- and trace-based approach guided by generality, 
whereas SkASP using a constraint solver (i.e., ASP itself) directly.  Furthermore, the target is different in that ASPs are learned, which include constraints. 
SkASP relates to meta-interpretation in ASP \cite{asp_meta} in rule and decision materialization. The purpose is, however, different: they aim at synthesizing a program of higher complexity ($\Sigma_2^P$) given programs of lower complexity (\textit{NP} and \textit{Co-NP}).

There are also interesting works in the intersection of ILP, program synthesis and ASP \cite{inductive_asp,inductive_asp2,XHAIL}. The ILASP system \cite{ILASP_system} learns an ASP program from  examples, and a set of modes, while minimizing a metric, typically the number of atoms. This program, learned completely from scratch, is not necessarily the best program from the user's point of view and may limit the possibility to localize the uncertainty based on the user's knowledge of the problem.  Indeed, if all sketched predicates are added in the modes with corresponding background knowledge, then the set of hypotheses of sketched ASP is a subset of ILASP. However, if we specify a sketched constraint \texttt{:- p(X),q(Y),X?=Y} with the negative example \texttt{\{p(1),q(2)\}} as modes for ILASP \cite{ILASP_system}, it would learn a program like \texttt{:- p(X)} (minimal program), but that is clearly not the program intended by the sketch. Furthermore, we compute all preferred programs instead of a single solution. 

Third, there is also work on constraint learning, where the systems such as CONACQ \cite{original_constraint_learning,besetalAIJ17} and QUACQ \cite{QUACQ} learn a set of propositional constraints,  and ModelSeeker \cite{BeldiceanuS12} learns global constraints governing a particular set of examples.  The subject has also been investigated in ILP setting \cite{lallouet}. 
However, the idea in all these approaches is to learn the complete specification of CSPs from scratch. Our setting is probably more realistic 
from a user perspective as it allows to use the knowledge that the user no doubt possesses about the underlying problem,
and also requires much less examples.  On the other hand, SkASP also makes, as other sketching approaches,
the strong assumption that the intended target program is an instance of the sketched one. This may not always
be true, for instance, when rules are missing in the program.  This is an interesting issue for further research.

Fourth, our approach is related to debugging of ASP \cite{debugging_using_meta_asp,debugging_asp}. 
Unlike SkASP such debuggers can be used to locate bugs, but typically 
do not provide help in fixing them. On the other hand,
once a bug is identified, SkASP could be used to repair it by introducing a sketch and a number of examples\footnote{ {\scriptsize During the experiments, we stumbled upon a peculiar bug. One ASP encoding that we discovered in a public repository worked mostly by pure luck. The following constraint \texttt{:-queen(X1,Y1),queen(X2,Y2),X1<X2,abs(Y1-X1)==abs(Y2-X2).} works because \texttt{abs} is not actually absolute value but an 
uninterpreted function, essentially it checks $X == Y$, and that is indeed the found solution. \sergey{(This kind of bugs would be extremely hard to find using traditional debuggers, since technically the encoding produced correct solutions.)}. 
Also, while working on the aggregate extension use-case, we discovered a subtle bug: the case of a single celebrity was not handled correctly. In both cases, the author has been contacted and models have been updated.}} The approach of \cite{ilp_debugging_asp} is based on classical ILP techniques of generalization and specification and does not provide the freedom to indicate uncertain parts of the program.




\section{Discussion and Conclusions}\label{sec:discussion}
Our contribution is four-fold: we have introduced the problem of sketched ASP; we have provided a rewriting schema for SkASP; we have created a dataset of sketches and we have evaluated our approach empirically demonstrating its efficiency and effectiveness.

User interaction is an interesting future direction, namely to suggest constraints and examples. For the former, if we are not able to reject a negative example, we can construct a constraint that would reject the negative examples and none of the positive examples. As for the examples, if we have two solutions to a problem, we can generate an example discriminating between them and ask user to clarify it, while this might not always be possible, since symmetric assignments might lead to semantically identical programs. In practice, however, this might be an important addition to simplify sketching for end users. Another direction is to incorporate \sergey{non-constant} preference handling into the model using the extensions of ASP for preference handling, such as \textit{asprin} \cite{asprin}.

\bibliography{references}
\bibliographystyle{splncs}
\end{document}